\newcommand{\eqdef}{\overset{\Delta}{=}}
\newtheorem{lemma}{Lemma}
\theoremstyle{definition}  % Make text in "example" environment *not* italics
\newtheorem*{example}{Example}
\newtheorem{definition}{Definition}
\newtheorem{theorem}{Theorem}
\begin{document}
\title{Unique Geometry and Texture from Corresponding Image Patches}

\author{Dor~Verbin,~\IEEEmembership{Student Member,~IEEE,}
        Steven~J.~Gortler,
        and~Todd~Zickler,~\IEEEmembership{Member,~IEEE}% <-this % stops a space
\IEEEcompsocitemizethanks{\IEEEcompsocthanksitem Authors are with the School
of Engineering and Applied Sciences, Harvard University, Cambridge,
MA, 02138.\protect\\
E-mail: \{dorverbin,sjg,zickler\}@seas.harvard.edu}% <-this % stops an unwanted space
% <-this % stops an unwanted space
}
\IEEEtitleabstractindextext{%
\begin{abstract}
We present a sufficient condition for recovering unique texture and viewpoints from unknown orthographic projections of a flat texture process. We show that four observations are sufficient in general, and we characterize the ambiguous cases. The results are applicable to shape from texture and texture-based structure from motion.
\end{abstract}

\begin{IEEEkeywords}
Shape from texture, structure from motion.
\end{IEEEkeywords}}

\maketitle

\IEEEdisplaynontitleabstractindextext
\IEEEpeerreviewmaketitle

\IEEEraisesectionheading{\section{Introduction}\label{sec:introduction}}

\IEEEPARstart{S}{uppose} we are given a collection of image patches that are the orthographic projections, from various directions,
of a single texture element or stochastic texture process. The collection of image patches may be explained by the texture and the viewpoints that generated it, but it may also be explained by a texture that is a spatially-sheared version of the veridical one, along with a collection of viewing geometries that are distorted. We want to understand when the geometry and texture can be recovered correctly and when they cannot.

This question arises in certain formulations of shape from texture~\cite{lobay2006shape,hartley,verbin}, where the local foreshortening of a spatially-repetitive texture process on a curved surface induces a perception of three-dimensional shape. In this version of the problem, the unknown per-patch geometries are interpreted as the local surface normal and tangent frames, and our quest is to understand when this ``shape'' and accompanying flat-texture process (e.g.,~right column of Fig.~\ref{fig:teaser}) can be correctly identified.

The same question is relevant to certain formulations of structure from motion, when an affine covariant region detector~\cite{matas,mikolajczyk} identifies corresponding ``interest points'' between images captured from distinct, orthographic viewpoints. In this context, we pursue a characterization of conditions that allow for recovering information about the view directions, even when only a single ``interest point''---as in the top row of Fig.~\ref{fig:teaser}---is shared between views.

It has been previously claimed that three distinct directions are sufficient to recover the correct geometry and flat-texture in general~\cite{lobay2006shape}. In this paper we show that in general the actual minimum number of directions is four. We also characterize the ambiguities that can arise when the patches are fewer in number or are geometrically degenerate.

\begin{figure}
    \centering
    \includegraphics[width=0.485\textwidth]{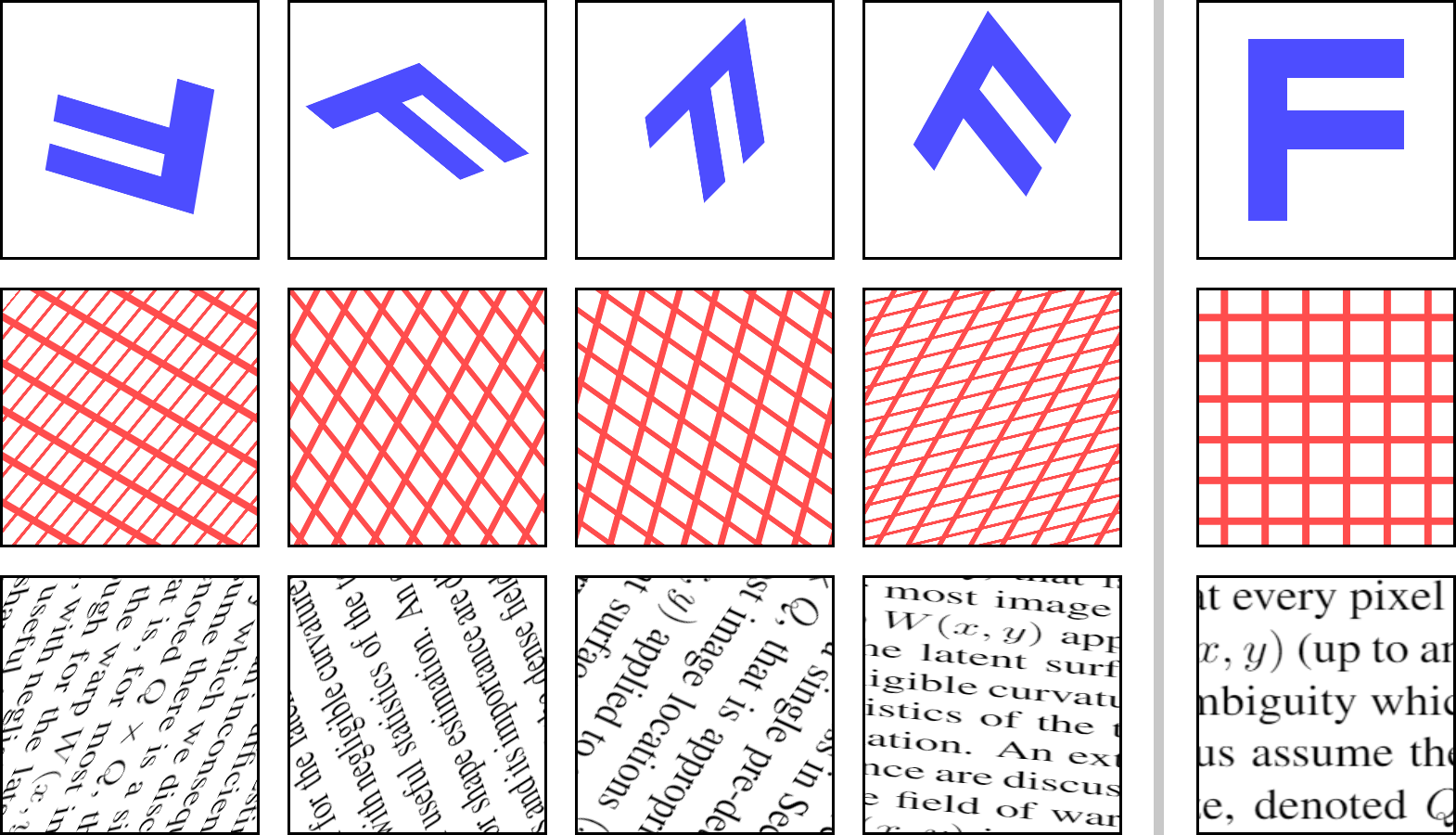}
    \caption{The left of each row is a collection of four image patches generated by orthographic projections of a single flat texture process, shown right. We  want to understand  the conditions that are sufficient for the correct viewing geometries and flat texture process to be recovered from the image patches, when neither the flat texture process nor the viewing geometries are known or labeled beforehand.}
    \label{fig:teaser}
\end{figure}

These results apply to input patches that have already been identified and cropped from images, and that have already been aligned up to translation-free affine transformations, as in the left of Fig.~\ref{fig:teaser}. We do not consider the practical aspects of how to obtain such patches from images in this paper, nor do we consider the sensitivity of the uniqueness results with respect to errors in cropping or alignment. We comment further on the relation to practice in  Section~\ref{sec:limitations}.

Before proceeding, we note that the viewing geometry associated with each image patch has three degrees of freedom: two for the view direction relative to the flat texture's surface normal, and one for the tangent orientation within the texture plane. Given a set of corresponding image patches as in the left of Fig.~\ref{fig:teaser}, one can at best expect to recover the generating tangent orientations relative to an arbitrary coordinate system in the tangent plane. This implies recovering the flat texture process up to a rotation of its two spatial dimensions (e.g., a 2D rotation of the textures on the right of Fig.~\ref{fig:teaser}). In shape from texture, this 2D orthogonal transform does not affect the surface normal so is inconsequential. For our purposes, we consider any geometry/texture explanation that differs from the veridical one by such a relation to be correct. In structure from motion, this means that when only one interest point is shared between viewpoints, we can recover the view directions up to a global rotation about the surface normal.

\section{Texture Cyclostationarity} \label{sec:cyclostationary}

Texture is the spatial repetition of appearance, and the statistical notion of cyclostationarity provides a flexible way to characterize the repetition. 
We say that a (flat, two-dimensional) texture is cyclostationary if its windowed statistics are doubly periodic, meaning there exist two linearly independent vectors, $\boldsymbol{\tau}, \boldsymbol{\sigma} \in \mathbb{R}^2$ for which texture statistics $S(\mathbf{x})$ in a spatial neighborhood around location $\mathbf{x}$ satisfy $S(\mathbf{x} + \boldsymbol{\tau}) = S(\mathbf{x})$ and $S(\mathbf{x} + \boldsymbol{\sigma}) = S(\mathbf{x})$ for all $\mathbf{x} \in \mathbb{R}^2$. Note that the definition of statistics $S$ requires care. One generally wants them to encode all of the perceivable appearance information, and finding a single set of statistics $S$ that do this for all different types of textures has been the topic of decades of research, dating back to the Julesz conjecture~\cite{julesz,portilla}.

If we give ourselves permission to tailor the statistics $S$ to different types of textures (as we do here), then cyclostationarity describes many different texture types, including those composed of an isolated structural element (an ``interest point'') as in Row~1 of Fig.~\ref{fig:teaser}, where the intensity itself satisfies $I(\mathbf{x} + \boldsymbol{\tau}) = I(\mathbf{x})$ and $I(\mathbf{x} + \boldsymbol{\sigma}) = I(\mathbf{x})$ for orthogonal $\boldsymbol{\tau}$ and $\boldsymbol{\sigma}$ that are the height and width of the element. It similarly includes periodic textures (Row 2) with orthogonal $\boldsymbol{\tau}$ and $\boldsymbol{\sigma}$ being the spacing between tiles. With more general, higher-order statistics $S$, it even applies to a broad range of stochastic cyclostationary textures such as text (Row 3), where $\boldsymbol{\tau}$ and $\boldsymbol{\sigma}$ correspond to the vertical and horizontal spacing between rows and letters, respectively.

An important subtlety in the definition of cyclostationarity is the requirement that the texture has two linearly independent period vectors, $\boldsymbol{\tau}$ and $\boldsymbol{\sigma}$. This is necessary to facilitate shape estimation, since in the case where $\boldsymbol{\tau}$ and $\boldsymbol{\sigma}$ are linearly dependent (i.e., point in the same direction), shape estimation in the perpendicular direction cannot be obtained. One example of such an ambiguous texture is a set of parallel lines. (Imagine only the vertical lines in Row 2 of Fig.~\ref{fig:teaser}.)

\section{Warps} \label{sec:warps}

Our image patches are orthographic projections of an oriented texture plane, and the resulting mapping from the texture plane to the image plane is a two-dimensional transformation that has a special form~\cite{lobay2006shape}. We call this a \emph{warp}. 
\begin{definition}
A $2\times 2$ matrix $T$ is a \emph{warp} if
it can be written as $T = R_1 F R_2$ where $R_1$ and $R_2$ are rotation matrices and $F$ is a foreshortening matrix which is diagonal with values $1$ and $r \in (0, 1]$ along its diagonal.
\end{definition}
Such matrices have a singular value of $1$, and have positive determinant.
(We will not use the $r\leq 1$ property until discussing ambiguous cases in Section~\ref{sec:ambiguity}.)
Each warp has three degrees of freedom as described above, which distinguishes them from a general 2D translation-free affine transformation with four degrees of freedom. These warps are called
texture imaging transformations in~\cite{lobay2006shape}.

Warps have a useful property for the purposes of this paper:
\begin{lemma}
\label{lem:det}
If the $2\times 2$ matrix $T$ is a warp, then $\det(T^\top T - I) = 0$, where $I$ is the $2\times 2$ identity matrix.
\end{lemma}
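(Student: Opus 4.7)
The plan is to reduce the claim to a direct spectral statement about $T^\top T$ by exploiting the cancellation that the rotation $R_1$ affords. Substituting $T = R_1 F R_2$ and using $R_1^\top R_1 = I$ and the symmetry of the diagonal matrix $F$, I would first compute
\[
T^\top T \;=\; R_2^\top F^\top R_1^\top R_1 F R_2 \;=\; R_2^\top F^2 R_2,
\]
so that
\[
T^\top T - I \;=\; R_2^\top\bigl(F^2 - I\bigr) R_2.
\]

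Next I would read off the structure of $F^2 - I$. Because $F = \operatorname{diag}(1, r)$, we have $F^2 - I = \operatorname{diag}(0,\, r^2 - 1)$, a diagonal matrix with a zero on the diagonal, hence $\det(F^2 - I) = 0$. Multiplicativity of the determinant then gives
\[
\det(T^\top T - I) \;=\; \det(R_2^\top)\,\det(F^2 - I)\,\det(R_2) \;=\; 0,
\]
since $\det R_2 = 1$. Conceptually, this just records the fact that a warp has a singular value equal to $1$, so $1$ is an eigenvalue of $T^\top T$ and $0$ is an eigenvalue of $T^\top T - I$.

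There is no real obstacle here; the only thing to be careful about is writing the expansion of $T^\top T$ correctly so that the outer rotation $R_1$ disappears, which is precisely what distinguishes warps from general $2\times 2$ matrices of positive determinant. The property $r\in(0,1]$ plays no role in this argument (consistent with the remark in the text that it will be needed only later, in Section~\ref{sec:ambiguity}); only the fact that the foreshortening matrix has a $1$ on its diagonal is used.
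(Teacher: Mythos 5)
Your proof is correct and follows essentially the same route as the paper's: expand $T^\top T = R_2^\top F^2 R_2$, factor $T^\top T - I = R_2^\top(F^2 - I)R_2$, and note that $F^2 - I$ has a zero diagonal entry so its determinant vanishes. The added remark identifying the unit singular value as the conceptual source of the zero eigenvalue is a nice gloss but does not change the argument.
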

\begin{proof}
If $T = R_1 F R_2$ then we can write $T^\top T = R_2^\top F^2 R_2$. We therefore have $T^\top T - I = R_2^\top F^2 R_2 - I = R_2^\top (F^2 - I)R_2$, so $\det(T^\top T - I) = \det(F^2 - I)$. Since $F^2 - I$ is a diagonal matrix with a $0$ entry along its main diagonal, we have $\det(T^\top T - I) = 0$.
\end{proof}

\section{Correct Geometry and Texture} \label{sec:geometry}

Our input is a set of image patches indexed by $i \in \{1, ..., N\}$. These are instances of patches from a flat cyclostationary texture process $\mathcal{T}$, with periods $\boldsymbol{\tau},\boldsymbol{\sigma}$ and statistics $S$, that have been spatially transformed by a set of warps $\{T_i\}_{i=1}^N$. We assume the existence of a texture correspondence algorithm that can identify and extract these image patches, and can register them by computing warps $\{W_i\}_{i=1}^N$ that explain all image patches $i \in \{1, ..., N\}$ by a single texture process $\mathcal{T}'$ that may be different from the generating one. Algorithms with this capability exist for various kinds of textures, including isolated texture elements~\cite{hartley}, compositions of SIFT keypoints~\cite{lobay2006shape}, and more general cyclostationary stochastic processes~\cite{verbin}.

Such an inferred texture process $\mathcal{T}'$ has its own $\boldsymbol{\tau}'$ and $\boldsymbol{\sigma}'$. Moreover, the computed warps $\{W_i\}_{i=1}^N$ of a successful algorithm must satisfy $W_i^{-1} T_i \boldsymbol{\tau} = \boldsymbol{\tau}'$ and $W_i^{-1} T_i \boldsymbol{\sigma} = \boldsymbol{\sigma}'$ for all $i \in \{1, ... N\}$. What remains is to characterize the relationship between the computed $\{W_i\}_{i=1}^N$ and the veridical $\{T_i\}_{i=1}^N$ (which also will give the relationship between $\mathcal{T}'$ and $\mathcal{T}$).

What we will show is that the computed warps are equal to the true warps (up to an inconsequential rotation) as long as there are $N\geq4$ input patches in general. We show this in two parts. First, cyclostationarity implies that all of the computed warps and generating warps are related by a single $2\times 2$ matrix $B$. Next, with the help of a small intermediate result, we show that four or more patches generically imply that the matrix $B$ is a rotation. 

To this end, we start with a definition.

\begin{definition} \label{def:goodwarps}
Let $\{T_i\}_{i=1}^N$ 
be a fixed set of warps.
We say that a set of warps $\{W_i\}_{i=1}^N$
is \emph{good} if 
there exist some $\boldsymbol{\tau}, \boldsymbol{\sigma} \in \mathbb{R}^2$
and some $\boldsymbol{\tau}', \boldsymbol{\sigma}' \in \mathbb{R}^2$ such that 
$\{W_i\}_{i=1}^N$ satisfies $W_i^{-1} T_i \boldsymbol{\tau} = \boldsymbol{\tau}'$ and $W_i^{-1} T_i \boldsymbol{\sigma} = \boldsymbol{\sigma}'$ for all $i \in \{1, ... N\}$.
\end{definition}

In the above definition, we do not actually care where the $\boldsymbol{\tau}, \boldsymbol{\sigma}$ and  $\boldsymbol{\tau}', \boldsymbol{\sigma}'$ come from, though in our setting these will be the periods of $\mathcal{T}$ and $\mathcal{T}'$ respectively.

\begin{lemma} \label{lemma:globalb}
Let $\{W_i\}_{i=1}^N$ be a good set of warps. 
Then $W_i = T_i B$ for all $i \in \{1, ..., N\}$,
where $B$ is some matrix with positive determinant.
\end{lemma}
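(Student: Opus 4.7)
The plan is to show that the equations defining ``good'' force the composite maps $W_i^{-1} T_i$ to be identical across all $i$, after which the lemma follows by simple rearrangement. First I would note that warps are invertible (their determinant is nonzero, as remarked after the warp definition), so $W_i^{-1}$ exists. Setting $C_i := W_i^{-1} T_i$, the goodness condition reads $C_i \boldsymbol{\tau} = \boldsymbol{\tau}'$ and $C_i \boldsymbol{\sigma} = \boldsymbol{\sigma}'$ for every $i$. The right-hand sides do not depend on $i$, so every $C_i$ agrees on the pair $\{\boldsymbol{\tau},\boldsymbol{\sigma}\}$.

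Next I would invoke the linear independence of $\boldsymbol{\tau}$ and $\boldsymbol{\sigma}$, which is inherited from the cyclostationarity of $\mathcal{T}$ (Section~\ref{section:cyclostationary}). Since a linear map on $\mathbb{R}^2$ is determined by its action on any basis, the common values $C_i \boldsymbol{\tau} = \boldsymbol{\tau}'$ and $C_i \boldsymbol{\sigma} = \boldsymbol{\sigma}'$ force all $C_i$ to be equal to a single $2\times 2$ matrix $C$. Rearranging $W_i^{-1} T_i = C$ yields $W_i = T_i C^{-1}$, so the choice $B := C^{-1}$ produces the desired relation $W_i = T_i B$ for all $i$. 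For the determinant claim, both $T_i$ and $W_i$ are warps and therefore have positive determinants (again by the remark following the warp definition), so $\det(B) = \det(W_i)/\det(T_i) > 0$.

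The only real subtlety is the appeal to linear independence of the period vectors; this is not part of the definition of ``good'' as stated but is a structural assumption on cyclostationary textures. Beyond that, no heavy machinery is required: the argument is essentially that ``agreement on a basis implies equality of the matrices,'' and the rest is bookkeeping.
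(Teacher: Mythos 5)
Your proof is correct and follows essentially the same route as the paper's: the paper defines $B_i = T_i^{-1}W_i$ and argues that $\boldsymbol{\tau},\boldsymbol{\sigma}$ being two independent vectors in the null space of $B_1^{-1}-B_i^{-1}$ forces all $B_i$ to coincide, which is exactly your ``agreement on a basis'' argument applied to $C_i = B_i^{-1}$. The subtlety you flag---that linear independence of $\boldsymbol{\tau},\boldsymbol{\sigma}$ is not stated in the definition of \emph{good} but is inherited from cyclostationarity---is equally implicit in the paper's own proof, so there is no substantive difference.
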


\begin{proof}
By assumption, 
there exist $\boldsymbol{\tau}', \boldsymbol{\sigma}' \in \mathbb{R}^2$ such that for all $i \in \{1, ..., N\}$, $W_i^{-1} T_i \boldsymbol{\tau} =\boldsymbol{\tau}'$ and $W_i^{-1} T_i \boldsymbol{\sigma} = \boldsymbol{\sigma}'$. For all $i$, since $T_i$ and $W_i$ are invertible, we can define an invertible matrix $B_i \eqdef T_i^{-1}W_i$, and therefore $B_1^{-1} \boldsymbol{\tau} = ... = B_N^{-1} \boldsymbol{\tau} = \boldsymbol{\tau}'$ and $B_1^{-1} \boldsymbol{\sigma} = ... = B_N^{-1} \boldsymbol{\sigma} = \boldsymbol{\sigma}'$. We have that $\boldsymbol{\tau}$ and $\boldsymbol{\sigma}$ are two independent vectors which are in the null space of $B_1^{-1} - B_i^{-1}$ for all $i \in \{2, ..., N\}$, and therefore $B_1^{-1} - B_i^{-1} = 0$, and $B_1 = ... = B_N \eqdef B$.
The determinant sign follows since warps also have positive determinants.
\end{proof}

\begin{figure}
    \centering
    \includegraphics[width=0.45\textwidth]{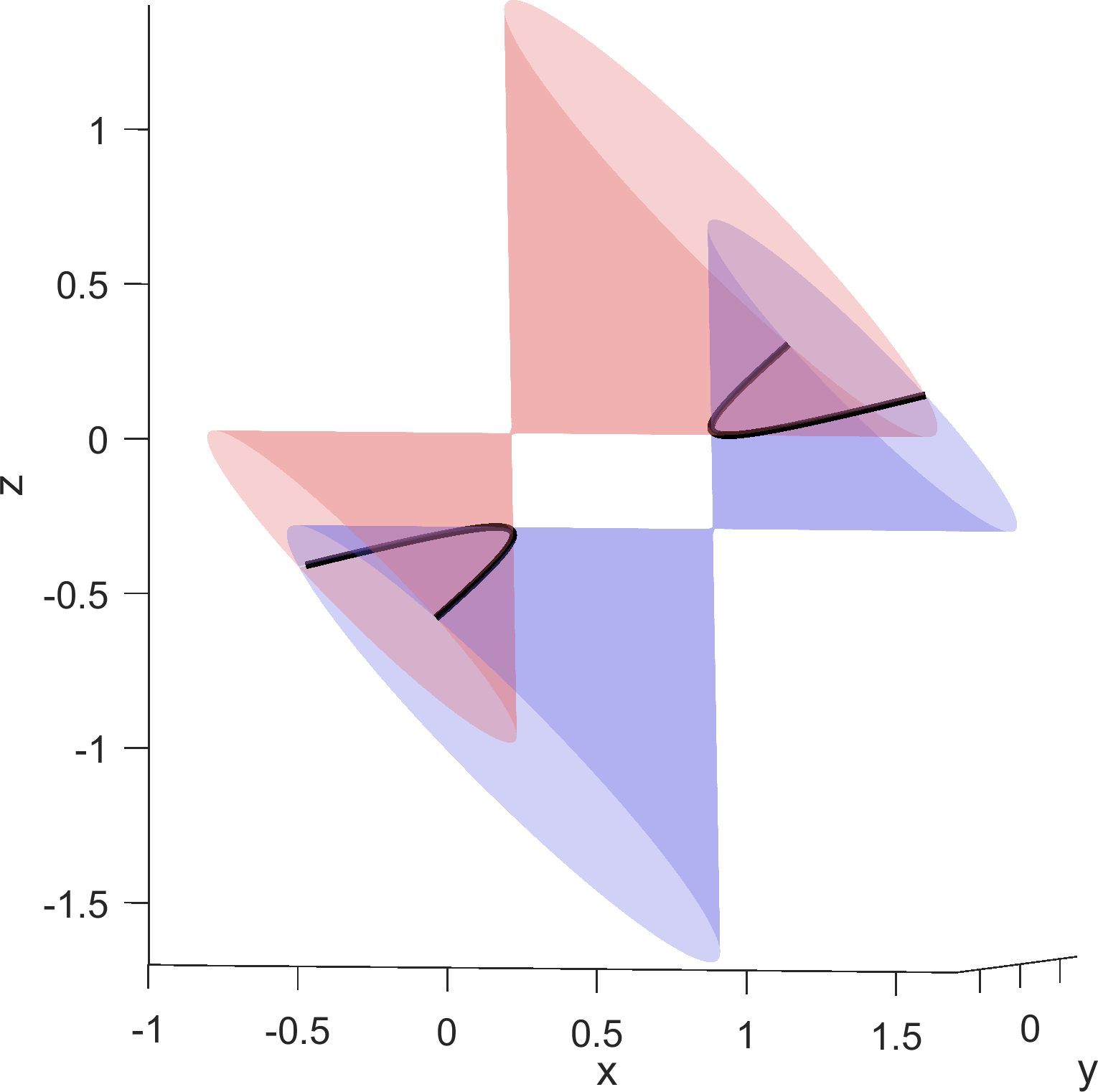}
    \caption{Each transformation $T_i$ maps to a point $(x_i,y_i,z_i)$ in this three-dimensional space, and in particular to a point on the red cone. The set of modified transformations $T_iB$ lies on a translated cone, with the translation determined by $B$. One example is shown in blue. A non-orthogonal $B$ exists only if a particular set of observed transformations $\{T_i\}_{i=1}^N$ corresponds to points contained in a planar slice of the original cone (e.g., on the black curve).}
    \label{fig:cone}
\end{figure}

Next we need the following simple lemma about cones:
\begin{lemma} \label{lemma:intersection}
The intersection of the two cones $y^2 = xz$ and $(y+b)^2 = (x+a)(z+c)$ for $a$, $b$ and $c$ which are not all zero, lies on a plane.
\end{lemma}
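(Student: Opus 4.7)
The plan is to prove Lemma~\ref{lemma:intersection} by the straightforward device of subtracting the two cone equations. Any point $(x,y,z)$ lying in the intersection satisfies both $y^2 = xz$ and $(y+b)^2 = (x+a)(z+c)$, so it must also satisfy their difference. I would expand this difference and collect terms to obtain a single linear relation in $x$, $y$, $z$.

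Concretely, the first step is to write out
\begin{equation*}
(y+b)^2 - y^2 = (x+a)(z+c) - xz,
\end{equation*}
which after expansion becomes $2by + b^2 = cx + az + ac$, i.e.
\begin{equation*}
cx - 2by + az + (ac - b^2) = 0.
\end{equation*}
This is an affine equation in $(x,y,z)$, so every intersection point lies on the affine locus it defines. The second step is to argue that this locus is genuinely a plane, which amounts to showing that the coefficient vector $(c,-2b,a)$ is nonzero. This is immediate unless $a = b = c = 0$, which is excluded by hypothesis: if any one of $a$, $b$, $c$ is nonzero, at least one of the three coefficients of $x$, $y$, $z$ is nonzero, so the equation defines a plane rather than all of $\mathbb{R}^3$ or the empty set.

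There is essentially no obstacle here—the main thing to be careful about is the boundary case where $a$ and $c$ both vanish but $b \neq 0$, in which case the ``plane'' degenerates to $y = -b/2$, which is still a bona fide plane in $\mathbb{R}^3$. So the proof reduces to one subtraction, one expansion, and a short case check to ensure the resulting linear equation is nondegenerate.
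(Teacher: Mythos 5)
Your proposal is correct and follows exactly the paper's own argument: subtract the two cone equations to obtain the linear relation $cx - 2by + az = b^2 - ac$. The only difference is that you explicitly verify the coefficient vector $(c,-2b,a)$ is nonzero when $a,b,c$ are not all zero, a small point of rigor the paper leaves implicit.
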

\begin{proof}
Subtracting the equation of the first cone from the equation of the second and rearranging, we obtain:
\begin{equation}
    cx - 2by + az = b^2 - ac,
\end{equation}
which is an equation of a plane.
\end{proof}

The next lemma is the heart of our argument.
\begin{lemma} \label{lemma:orthogonality}
Let $\{T_i\}_{i=1}^N$ be a set of warps, and assume $\{T_i B\}_{i=1}^N$ is also a set of warps for some invertible $2\times 2$ matrix $B$ with positive determinant. If $\{T_i^\top T_i\}_{i=1}^N$ contains four matrices that are affinely independent, then $B$ must be a rotation.
\end{lemma}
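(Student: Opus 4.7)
The plan is to view each $M_i \eqdef T_i^\top T_i$ as a point in the $3$-dimensional space of symmetric $2 \times 2$ matrices, using the shifted coordinates $(x, y, z) = (M_{11} - 1,\, M_{12},\, M_{22} - 1)$, and to show that the hypotheses on $T_i$ and on $T_i B$ force these points to lie simultaneously on two cones of exactly the form appearing in Lemma~\ref{lemma:intersection}. The existence of four affinely independent $M_i$ will then force the two cones to coincide, which turns out to hold precisely when $B$ is a rotation.

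First, I would apply Lemma~\ref{lem:det} to both $T_i$ and $T_i B$. The former gives $\det(M_i - I) = 0$, and the latter gives $\det(B^\top M_i B - I) = 0$. Factoring $B^\top(\,\cdot\,)B$ out of the second determinant rewrites this as $\det(B)^2 \det(M_i - C) = 0$, where $C \eqdef (BB^\top)^{-1}$ is symmetric positive-definite. Observe that $C = I$ if and only if $BB^\top = I$, and since $\det B > 0$ by hypothesis, this happens exactly when $B$ is a rotation.

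Next, I would translate both determinantal conditions into the chosen coordinates. A direct expansion shows that $\det(M_i - I) = 0$ becomes $y_i^2 = x_i z_i$, i.e.\ the first cone of Lemma~\ref{lemma:intersection}. A similar expansion of $\det(M_i - C) = 0$ yields $(y_i + b)^2 = (x_i + a)(z_i + c)$ with $(a, b, c) = (1 - C_{11},\, -C_{12},\, 1 - C_{22})$, which is exactly the second cone in that lemma. Thus each $M_i$ corresponds to a point lying on the intersection of the two cones, and $(a, b, c) = (0,0,0)$ if and only if $C = I$, i.e.\ $B$ is a rotation.

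Finally, suppose for contradiction that $B$ is not a rotation, so $(a, b, c) \neq (0,0,0)$. The map $M \mapsto (M_{11} - 1,\, M_{12},\, M_{22} - 1)$ is an invertible affine map, so the images of four affinely independent $M_i$ are four affinely independent points in $\mathbb{R}^3$. By Lemma~\ref{lemma:intersection} these four points must lie on a single plane, which contradicts affine independence. Hence $B$ must be a rotation. The main bookkeeping to handle carefully is the derivation $C = (BB^\top)^{-1}$ and the expansion of $\det(M_i - C)$ in shifted coordinates, matching the exact form $(y+b)^2 = (x+a)(z+c)$ required by Lemma~\ref{lemma:intersection}; both are routine but are the steps where a small error would unravel the argument.
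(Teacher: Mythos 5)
Your proposal is correct and follows essentially the same route as the paper's proof: both use Lemma~\ref{lem:det} on $T_i$ and $T_iB$, rewrite the second condition as $\det(T_i^\top T_i - B^{-\top}B^{-1})=0$ (your $C=(BB^\top)^{-1}$ is the same matrix), place the shifted points $H_i = T_i^\top T_i - I$ on the two cones of Lemma~\ref{lemma:intersection}, and conclude from the four affinely independent points that the cones coincide, i.e.\ $B$ is orthogonal and hence a rotation. The only cosmetic difference is that you phrase the last step as a contradiction while the paper argues directly.
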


\begin{proof}
From Lemma~\ref{lem:det}
$\det(H_i) = 0$, where 
$H_i \eqdef T_i^\top T_i - I$.
Writing $H_i \eqdef \left[\begin{matrix}x_i & y_i \\ y_i & z_i \end{matrix}\right]$, we have that $x_iz_i - y_i^2 = 0$, i.e., $H_i$ is on the cone $y^2 = xz$ (see Fig.~\ref{fig:cone}).

Since $T_iB$ is also a warp, we also have that $\det(B^\top T_i^\top T_i B - I) = 0$. Because $B$ is invertible, we 
can write this as $\det(T_i^\top T_i - B^{-\top} B^{-1}) = 0$, or equivalently, $\det(H_i + I - B^{-\top} B^{-1}) = 0$. Therefore, $H_i$ is also on the cone $(y+b)^2 = (x+a)(z+c)$ where $I - B^{-\top} B^{-1} \eqdef \left[\begin{matrix} a & b \\ b & c \end{matrix}\right]$. Note that the two cones are identical if and only if $a = b = c = 0$, which is equivalent to $B$ being orthogonal.

By assumption, there exist at least four matrices $T_i^\top T_i$ which are not coplanar (in the three-dimensional space of symmetric $2\times 2$ matrices), and therefore there are at least four matrices $H_i$ which are not coplanar. But from Lemma~\ref{lemma:intersection}, the intersection of two translated cones must lie in a plane, unless the translation is zero. Therefore the two cones which contain the (non-coplanar) $H_i$ matrices must be identical, meaning that $a = b = c = 0$. This shows that $B$ is orthogonal. Its positive determinant makes $B$ a rotation.
\end{proof}

This says that all observable warps correspond to points on the red cone of Fig.~\ref{fig:cone}, and that ambiguity in the interpretation of geometry and texture can only occur when all observed transformations $\{T_i\}_{i=1}^N$ also correspond to points contained in some translated copy of that cone, such as the blue cone in the figure. This only happens when the observed transformations lie in some planar slice of the red cone, such as the black curve in  Fig.~\ref{fig:cone}.

Putting together Lemmas~\ref{lemma:globalb} and~\ref{lemma:orthogonality}, we arrive at the following
conclusion:
\begin{theorem}
\label{thm:main}
Let $\{T_i\}_{i=1}^N$ be a  set of warps and $\{W_i\}_{i=1}^N$ be a good set of warps. 
If $\{T_i^\top T_i\}_{i=1}^N$ contains four matrices that are affinely independent, then $W_i=T_i B$ for all $i \in \{1, ... N\}$, where $B$ is a rotation matrix.
\end{theorem}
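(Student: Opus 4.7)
The plan is a direct composition of the two preceding lemmas; the theorem is essentially a bookkeeping step, so I expect no genuine obstacle beyond verifying that the hypotheses of Lemma~\ref{lemma:orthogonality} are actually available from the hypotheses of the theorem.

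First I would invoke Lemma~\ref{lemma:globalb}. Since $\{W_i\}_{i=1}^N$ is assumed good with respect to the fixed set $\{T_i\}_{i=1}^N$, the lemma immediately yields a single invertible matrix $B$ with $\det(B)>0$ such that $W_i = T_i B$ for every $i$. This collapses the per-patch ambiguity into a single global unknown $B$, and it is the only place the ``good'' hypothesis is used.

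Next I would verify that the hypotheses of Lemma~\ref{lemma:orthogonality} are in force. The set $\{T_i\}_{i=1}^N$ consists of warps by assumption, and by the relation $W_i = T_i B$ just obtained, the set $\{T_i B\}_{i=1}^N = \{W_i\}_{i=1}^N$ is also a set of warps (since $\{W_i\}$ is given as a good set of \emph{warps}). The matrix $B$ is invertible with positive determinant, and the affine-independence hypothesis on $\{T_i^\top T_i\}_{i=1}^N$ is assumed in the theorem statement. Thus all hypotheses of Lemma~\ref{lemma:orthogonality} are met, and applying it forces $B$ to be a rotation.

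Combining the two conclusions gives $W_i = T_i B$ with $B$ a rotation, which is exactly the statement of the theorem. The only subtle point worth flagging in the write-up is making clear that the $B$ produced by Lemma~\ref{lemma:globalb} is the same $B$ fed into Lemma~\ref{lemma:orthogonality}; this is immediate since $T_i B = W_i$ is a warp for each $i$, but it is the step a careless reader might gloss over.
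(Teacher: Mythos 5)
Your proposal is correct and matches the paper exactly: the paper offers no separate proof of Theorem~\ref{thm:main} beyond the remark that it follows by ``putting together'' Lemmas~\ref{lemma:globalb} and~\ref{lemma:orthogonality}, which is precisely the composition you carry out. Your explicit verification that $\{T_iB\}_{i=1}^N = \{W_i\}_{i=1}^N$ is a set of warps and that $B$ has positive determinant is a useful bit of bookkeeping the paper leaves implicit.
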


Given a set of patches with true warps $\{T_i\}_{i=1}^N$, Theorem~\ref{thm:main} says that in the general case, the only possible good warps are the true warps $\{T_i\}_{i=1}^N$, up to an inconsequential rotation. This means that when $N\geq 4$ generic transformations are observed, and when a shape from texture algorithm such as~\cite{hartley, lobay2006shape, verbin} produces a good set of warps, the algorithm must return the true geometry.

\begin{figure}
    \centering
    \includegraphics[width=0.46\textwidth]{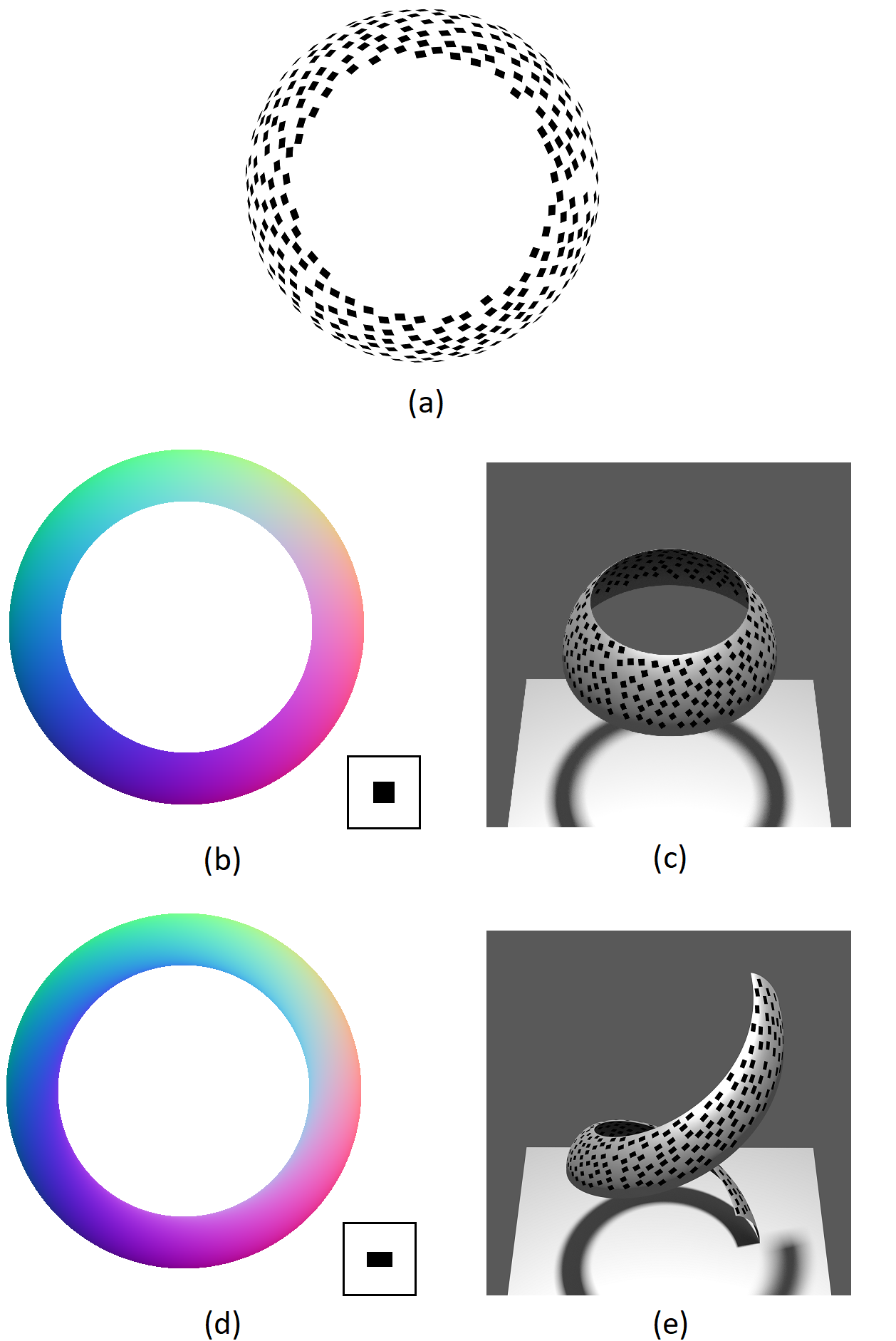}
    \caption{Example of ambiguous shape from texture, using $\lambda = 0.5$ in Eq.~\ref{eq:phi}. (a) Input image. (b-c) Correct interpretation: true surface normals and frontal texture element, and true surface from new viewpoint. (d-e) Alternative interpretation: frontal texture element transformed by $B^{-1}$, surface normals from warps transformed by $B$, and surface obtained by integrating the gradient defined by the alternative surface normals. The texture elements are scaled $\times 2$ for clarity.}
    \label{fig:sphere}
\end{figure}

\section{Ambiguous cases} \label{sec:ambiguity}

When all $\{H_i\}_{i=1}^N$ matrices lie in a planar slice of the cone, we do not guarantee that $B$ is restricted to being a rotation, because the conditions of Theorem~\ref{thm:main} will not hold. This is always the situation when $N=3$, since any three matrices are contained in a plane; but it can also occur for larger $N$ if the warps happen to correspond to matrices $\{H_i\}_{i=1}^N$ that are coplanar.

In fact, when all matrices $\{H_i\}_{i=1}^N$ lie in a plane, geometry can either be unique or ambiguous, depending on the plane. An ambiguous case is easy to illustrate by example.

\begin{example}
Define an annular section of hemispherical surface that has height function $z(x, y) = \sqrt{1 - x^2 - y^2}$ on domain $\lambda \leq x^2 + y^2 \leq 1$ with some choice of $\lambda \in (0, 1)$. Define a (conspiratorial) tangent orientation field on the surface,
\begin{equation}\label{eq:phi}
    \phi(x, y) = \frac{1}{2} \cos^{-1}\left(\frac{\lambda}{x^2 + y^2}\right),\quad \lambda \leq x^2+ y^2 \leq 1,
\end{equation}
and select the true warp at position $(x, y)$ to be:
\begin{equation}
    T(x, y) = \frac{1}{\sqrt{x^2+y^2}}\left[\begin{matrix}x & -y \\ y & x\end{matrix}\right] \left[\begin{matrix}z(x, y) & 0 \\ 0 & 1\end{matrix}\right] R(\phi(x, y)),
\end{equation}
where $R(\phi)$ is a matrix representing rotation by angle $\phi$.

Applying the warps $\{T(x, y)\}$ to a square texture element and using them to paint the hemispherical section produces the image shown in Fig.~\ref{fig:sphere}(a). Figure~\ref{fig:sphere}(b) shows the true surface normals (visualized as an RGB image by linearly scaling them to the range $[0, 1]$) and true texture element; and Figure~\ref{fig:sphere}(c) renders the shape from a different viewpoint and with shading.

Let us choose:
\begin{equation}
B = \left[\begin{matrix}\frac{1}{\sqrt{1-\lambda}} & 0 \\ 0 & \frac{1}{\sqrt{1+\lambda}}\end{matrix}\right].
\end{equation}
Then the set $\{W(x, y)\}$, where $W(x, y) = T(x, y) B$, is an alternative set of warps which, along with a rectangular element obtained by scaling the square using $B^{-1}$, also explains the observed image patches.

The alternative warps $\{W(x, y)\}$ imply an alternative field of normals, which is shown in Fig.~\ref{fig:sphere}(d) beside the scaled texture element. One can verify that the associated gradient field $\nabla z = \left(\frac{\partial z}{\partial x}, \frac{\partial z}{\partial y}\right)=-\frac{1}{n_z(x, y)}\left(n_x(x, y),n_y(x, y)\right)$ is irrotational, meaning it satisfies $\frac{\partial^2 z}{\partial x\partial y} = \frac{\partial^2 z}{\partial y\partial x}$. Irrotational gradients can be integrated over a simply connected domain, so we remove the positive $x$-axis $\{(x,y):x>0, y=0\}$ from $\{(x,y): \lambda \leq x^2 + y^2 \leq 1\}$ and integrate the gradient field over this simply connected region. The resulting surface is helical and is rendered in Fig.~\ref{fig:sphere}(e).

Both of the shapes in Figs.~\ref{fig:sphere}(c) \& (e) render exactly to (a) when viewed from above and without shading.  \qed

\end{example}

This example shows that when an ambiguity exists in shape from texture, the relation between the correct and alternative shapes is more complicated than, say, a simple convex-concave ambiguity. It also suggests that additional shape constraints, such as surface orientation at occluding contours or shading cues (as done in~\cite{white2006combining}), might be helpful in resolving these ambiguities.

A precise characterization of the planar slices that cause ambiguity is achievable but somewhat nuanced. When the matrices $\{H_i\}_{i=1}^N$ are all coplanar, they generically lie in a conic section that is an ellipse, a hyperbola, or a parabola. Because the eigenvalues of $T_i^\top T_i$ are both at most $1$, $H_i=T_i^\top T_i - I$ has non-positive eigenvalues, and $H_i$ must lie in the negative single cone $y^2 = xz$ where $z \leq 0$. Using Sylvester's law of inertia, if $B^\top T_i^\top T_i B - I$ is on the negative single cone, then so is $T_i^\top T_i - B^{-\top} B^{-1} = H_i + I - B^{-\top} B^{-1}$. Expanding on the proof of Lemma~\ref{lemma:orthogonality}, for a non-rotational $B$ to exist, the $\{H_i\}_{i=1}^N$ matrices must in fact be on an intersection of two shifted \emph{single} cones. One can show that such an intersection only exists when the conic section containing $\{H_i\}_{i=1}^N$ is a hyperbola. Therefore, if the conic section is an ellipse or a parabola, $B$ must be a rotation matrix, and geometry is unique. If the conic section is a hyperbola, one can show that a non-rotational $B$ matrix always exists such that all $\{T_iB\}_{i=1}^N$ are valid warps, and thus geometry is ambiguous as in the example above. 

To be complete, the remaining cases are when the conic section is degenerate and consists of a single point, single line, or two intersecting lines. One can easily show that in the cases of a single point or line there exists a set of non-rotational $B$ matrices (and geometry is ambiguous), while for two intersecting lines geometry is unique, since two intersecting lines cannot be obtained by intersecting two translated cones.

\section{Limitations and Practical Implications} \label{sec:limitations}

As noted in the introduction, our results apply to input patches that have already been cropped, and already aligned pairwise up to translation-free affine transformations. Techniques for harvesting and aligning patches in this way already exist for various texture types. For isolated texture elements like those in Fig.~\ref{fig:sphere} and the top row of Fig.~\ref{fig:teaser}, one can use the methods described in~\cite{matas,mikolajczyk}. For general textures---including stochastic textures like the bottom row of Fig.~\ref{fig:teaser}---one can use the method described in a companion paper to this one~\cite{verbin}, which employs a game-based approach to extract aligned patches while recovering smooth surface shape at the same time.

The condition of Theorem~\ref{thm:main}, which guarantees the existence of a unique explanation, relies on the input set of warps being \emph{good}. This is equivalent to having perfect cropping and alignment of the input patches. Thus, the condition of Theorem~\ref{thm:main} only applies to this case. It also only applies to orthographic projections, and does not provide any information about uniqueness in perspective or weak-perspective viewing conditions.

\end{document}